\documentclass{article}

\usepackage[preprint]{neurips_2025}

\usepackage{natbib}
\usepackage[utf8]{inputenc} % allow utf-8 input
\usepackage[T1]{fontenc}    % use 8-bit T1 fonts
\usepackage{hyperref}       % hyperlinks
\usepackage{url}            % simple URL typesetting
\usepackage{booktabs}       % professional-quality tables
\usepackage{amsfonts}       % blackboard math symbols
\usepackage{nicefrac}       % compact symbols for 1/2, etc.
\usepackage{microtype}      % microtypography
\usepackage{xcolor}         % colors
\usepackage{caption}
\usepackage{tikz}
\usetikzlibrary{arrows.meta, positioning}
\usetikzlibrary{shapes.geometric}
\usepackage{subcaption}

\usepackage{amsmath}
\usepackage{amssymb}
\usepackage{mathtools}
\usepackage{amsthm}
\usepackage{stmaryrd}
\usepackage{comment}
\usepackage{centernot}
\usepackage{enumitem}

\theoremstyle{plain}
\newtheorem{theorem}{Theorem}%[section]
\newtheorem{prop}{Proposition}
\newtheorem{lemma}{Lemma}

\theoremstyle{definition}
\newtheorem{definition}{Definition}
\newtheorem{assumption}{Assumption}
\theoremstyle{remark}

\theoremstyle{remark}

\DeclareMathOperator*{\argmax}{arg\,max}

\newcommand{\ind}{\perp\!\!\!\!\!\!\perp} 

\title{Planning under Distribution Shifts \\ with Causal POMDPs}

\author{%
  Matteo Ceriscioli, Karthika Mohan \\
  Causal Intelligence and Reasoning Lab\\
 Oregon State University\\
  Corvallis, OR 97331, USA \\
  \texttt{\{ceriscim,karthika.mohan\}@oregonstate.edu} \\
}

\begin{document}

\maketitle

\begin{abstract}
In the real world, planning is often challenged by distribution shifts. As such, a model of the environment obtained under one set of conditions may no longer remain valid as the distribution of states or the environment dynamics change, which in turn causes previously learned strategies to fail. In this work, we propose a theoretical framework for planning under partial observability using Partially Observable Markov Decision Processes (POMDPs) formulated using causal knowledge. By representing shifts in the environment as interventions on this causal POMDP, the framework enables evaluating plans under hypothesized changes and actively identifying which components of the environment have been altered. We show how to maintain and update a belief over both the latent state and the underlying domain, and we prove that the value function remains piecewise linear and convex (PWLC) in this augmented belief space. Preservation of PWLC under distribution shifts has the advantage of maintaining the tractability of planning via $\alpha$-vector-based POMDP methods.
\end{abstract}

% Uncomment the following to link to your code, datasets, an extended version or similar.
% You must keep this block between (not within) the abstract and the main body of the paper.
% \begin{links}
%     \link{Code}{https://aaai.org/example/code}
%     \link{Datasets}{https://aaai.org/example/datasets}
%     \link{Extended version}{https://aaai.org/example/extended-version}
% \end{links}
\section*{Introduction}

Planning aims to compute an optimal way for an agent to act, assuming access to a complete and correct model of the environment. In stochastic settings, this often means identifying a policy that maps information states to actions and maximizes long-term reward \citep{POMDP}. Difficulties arise once the agent is deployed in a different environment and the underlying state distribution or transition dynamics change. Distribution shifts undermine the assumption that the transition and observation processes remain fixed and accurately describe how states evolve.

%In this paper, we focus on scenarios in which the environment may be subject to distribution shifts, i.e., changes in the probability distribution of the environment state and transition dynamics. It is well documented that distribution shifts can negatively affect the performance of AI systems \cite{shimoidara, dist_shifts}. Since planning relies on complete knowledge of system dynamics, situations where the environment is dynamic and gradually changes can reduce our ability to model it accurately. In such cases, naive planning approaches can become brittle, leading to performance degradation during plan execution.

%For example, consider deploying a delivery rover whose policy and world model were obtained under in an environment with specific conditions, such as urban areas in a developed country with temperate weather. If the same rover is deployed in rural areas with snowy or sandy terrain, or with extreme temperatures, an effective planner must account for possible changes in friction, vision capabilities, mechanical stress, and the consistency of walkways and roads, changes that could be detected from the robot's observations.
To illustrate this issue,  consider a delivery rover whose policy and world model were obtained in an environment with specific conditions, such as urban areas with temperate weather. When the same rover is deployed in a different setting, relevant factors, such as friction, visibility, or mechanical stress, may change. A conventional POMDP planner cannot identify which mechanisms have shifted or why performance has degraded. A causally informed model can represent such changes as interventions on specific components, reason about their effects on the environment and the sensors, and support effective planning under the resulting distribution shift.

This paper examines whether planning remains tractable in the presence of distribution shifts. The tractability of finite-horizon POMDPs relies on the fact that their value functions are PWLC in the belief state \citep{optimalPOMDP}. When the environment is affected by an unknown shift, the resulting ambiguity in the transition dynamics introduces an additional source of uncertainty into the model. Although certain structured forms of uncertainty are known to preserve the PWLC structure \citep{osogami}, general forms of ambiguity can cause finite-horizon POMDPs to lose their PWLC properties \citep{curseAmbiguity}. We consider a causal formulation of POMDPs in which distribution shifts are represented as interventions. Under this formulation, we show that the value function retains its PWLC structure despite the presence of unknown shifts. Thus this ensures that standard $\alpha$-vector–based POMDP planning algorithms remain applicable.

Causal modeling is widely used to support robust decision making under perturbations of the data-generating process \citep{transportability, pearlCausalRevolution, causalityForML, ceriscioli2025robust}, as it offers a principled framework for analyzing how distribution shifts affect outcomes. In particular, such shifts can be naturally represented as interventions in a causal model \citep{richens2024robust, ceriscioli2025robust}, which allows systematic reasoning about their consequences.

The primary contribution of this paper is a causal framework for planning under partial observability in the presence of distribution shifts, which preserves the PWLC structure of the value function. The framework enables the evaluation of policies under specified shifts and supports the detection and identification of changes in the environment.

All proofs are included in Appendix~\ref{app:proofs}.

%This work makes the following contributions:

%\begin{enumerate}
%    \item We frame Causal POMDPs as a model for robust planning when the environment may be affected by distribution shifts.
%    \item We propose stochastic shift interventions as a family of intervention to represent distribution shifts in causal models.
%    \item We show how to find and evaluate policies on Causal POMDPs, and how to detect and identify distribution shifts while planning.
%\end{enumerate}

\section{Preliminaries}

\begin{figure*}
    \centering
    \resizebox{0.28\textwidth}{!}{
    \begin{subfigure}{0.32\textwidth}
        \centering
        \begin{tikzpicture}[
            roundnode/.style={circle, draw=black, minimum size=8.5mm, inner sep=0pt},
            squarenode/.style={rectangle, draw=black, fill=blue!20, minimum size=8.5mm, inner sep=0pt},
            trianglenode/.style={regular polygon, regular polygon sides=3, draw=black, fill=red!30, minimum size=10.5mm, inner sep=0pt, shape border rotate=0},
            diamondnode/.style={diamond, draw=black, fill=yellow!30, minimum size=11.5mm, inner sep=0pt},
            ->, >=Stealth,
            scale=0.65
            ]
        
            % Nodes
            \node[roundnode] (Yt) {$Y_{t-1}$};
            \node[roundnode, below=0.30cm of Yt] (Xt) {$X_{t-1}$};
            \node[squarenode, below=0.30cm of Xt] (Dt) {$D_{t-1}$};
            \node[diamondnode, below=0.30cm of Dt] (Ut) {$U_{t-1}$};
            
            \node[roundnode, right=0.5cm of Yt] (Yt1) {$Y_{t}$};
            \node[roundnode, below=0.30cm of Yt1] (Xt1) {$X_{t}$};
            \node[squarenode, right=0.5cm of Dt] (Dt1) {$D_{t}$};
            \node[diamondnode, below=0.30cm of Dt1] (Ut1) {$U_{t}$};
            
            \node[roundnode, right=0.5cm of Yt1] (Yt2) {$Y_{t+1}$};
            \node[roundnode, below=0.30cm of Yt2] (Xt2) {$X_{t+1}$};
            \node[squarenode, right=0.5cm of Dt1] (Dt2) {$D_{t+1}$};
            \node[diamondnode, below=0.30cm of Dt2] (Ut2) {$U_{t+1}$};
        
            % Edges
            \draw[->] (Yt) -- (Xt);
            \draw[dashed,bend right] (Yt) to [out=35,in=145] (Dt);
            \draw[bend left] (Xt) to [out=-35,in=215] (Ut);
            \draw[->] (Dt) -- (Ut);
            \draw[->] (Yt) -- (Yt1);

            \draw[->] (Yt1) -- (Xt2);
            %\draw[->] (Dt) -- (Xt1);
            %\draw[->] (Yt1) -- (Xt1);
            \draw[dashed,bend right] (Yt1) to [out=35,in=145] (Dt1);
            \draw[bend left] (Xt1) to [out=-35,in=215] (Ut1);
            \draw[->] (Dt1) -- (Ut1);

            \draw[->] (Dt1) -- (Xt2);
            \draw[->] (Yt2) -- (Xt2);
            \draw[dashed,bend right] (Yt2) to [out=35,in=145] (Dt2);
            \draw[bend left] (Xt2) to [out=-35,in=215] (Ut2);
            \draw[->] (Dt2) -- (Ut2);
        
            % Black dots and arrows
            \node[draw=none, left=0.4cm of Dt] (dots_left) {$\cdots$};
            \draw[->] (dots_left) -- (Xt);
            \node[draw=none, right=0.4cm of Xt2] (dots_right) {$\cdots$};
            \draw[->] (Dt2) -- (dots_right);
    \end{tikzpicture}
        \caption{Non-time-homogeneous \\ Causal POMDP}
        \label{fig:POMDPa}
    \end{subfigure}}%
    \hfill
    \resizebox{0.28\textwidth}{!}{
    \begin{subfigure}{0.32\textwidth}
        \centering
        \begin{tikzpicture}[
            roundnode/.style={circle, draw=black, minimum size=8mm, inner sep=0pt},
            squarenode/.style={rectangle, draw=black, fill=blue!20, minimum size=8mm, inner sep=0pt},
            trianglenode/.style={regular polygon, regular polygon sides=3, draw=black, fill=red!30, minimum size=10mm, inner sep=0pt, shape border rotate=0},
            diamondnode/.style={diamond, draw=black, fill=yellow!30, minimum size=11mm, inner sep=0pt},
            ->, >=Stealth
            ]
        
            % Nodes
            \node[roundnode] (Yt) {$Y_{t-1}$};
            \node[roundnode, below=0.30cm of Yt] (Xt) {$X_{t-1}$};
            \node[squarenode, below=0.30cm of Xt] (Dt) {$D_{t-1}$};
            \node[diamondnode, below=0.30cm of Dt] (Ut) {$U_{t-1}$};
            
            \node[roundnode, right=0.5cm of Yt] (Yt1) {$Y_{t}$};
            \node[roundnode, below=0.30cm of Yt1] (Xt1) {$X_{t}$};
            \node[squarenode, right=0.5cm of Dt] (Dt1) {$D_{t}$};
            \node[diamondnode, below=0.30cm of Dt1] (Ut1) {$U_{t}$};
            
            \node[roundnode, right=0.5cm of Yt1] (Yt2) {$Y_{t+1}$};
            \node[roundnode, below=0.30cm of Yt2] (Xt2) {$X_{t+1}$};
            \node[squarenode, right=0.5cm of Dt1] (Dt2) {$D_{t+1}$};
            \node[diamondnode, below=0.30cm of Dt2] (Ut2) {$U_{t+1}$};
        
            % Edges
            \draw[->] (Yt) -- (Xt);
            \draw[dashed,bend right] (Yt) to [out=35,in=145] (Dt);
            \draw[bend left] (Xt) to [out=-35,in=215] (Ut);
            \draw[->] (Dt) -- (Ut);
            
            \draw[->] (Dt) -- (Xt1);
            \draw[->] (Yt1) -- (Xt1);
            \draw[dashed,bend right] (Yt1) to [out=35,in=145] (Dt1);
            \draw[bend left] (Xt1) to [out=-35,in=215] (Ut1);
            \draw[->] (Dt1) -- (Ut1);

            \draw[->] (Dt1) -- (Xt2);
            \draw[->] (Yt2) -- (Xt2);
            \draw[dashed,bend right] (Yt2) to [out=35,in=145] (Dt2);
            \draw[bend left] (Xt2) to [out=-35,in=215] (Ut2);
            \draw[->] (Dt2) -- (Ut2);
        
            % Black dots and arrows
            \node[draw=none, left=0.4cm of Dt] (dots_left) {$\cdots$};
            \draw[->] (dots_left) -- (Xt);
            \node[draw=none, right=0.4cm of Xt2] (dots_right) {$\cdots$};
            \draw[->] (Dt2) -- (dots_right);
    \end{tikzpicture}
        \caption{ Time-homogeneous \\ \hspace{0.4cm}Causal POMDP}
        \label{fig:POMDPb}
    \end{subfigure}}
    \hfill
    \resizebox{0.28\textwidth}{!}{
    \begin{subfigure}{0.32\textwidth}
        \centering
        \begin{tikzpicture}[
            roundnode/.style={circle, draw=black, minimum size=8.5mm, inner sep=0pt},
            squarenode/.style={rectangle, draw=black, fill=blue!20, minimum size=8.5mm, inner sep=0pt},
            trianglenode/.style={regular polygon, regular polygon sides=3, draw=black, fill=red!30, minimum size=10.5mm, inner sep=0pt, shape border rotate=0},
            diamondnode/.style={diamond, draw=black, fill=yellow!30, minimum size=11.5mm, inner sep=0pt},
            ->, >=Stealth
            ]
        
            % Nodes
            \node[roundnode] (Yt) {$Y_{t}$};
            \node[roundnode, below=0.30cm of Yt] (Xt) {$X_{t}$};
            \node[squarenode, below=0.30cm of Xt] (Dt) {$D_{t}$};
            \node[diamondnode, below=0.30cm of Dt] (Ut) {$U_{t}$};
            \node[roundnode, right=0.5cm of Yt] (Yt1) {$Y_{t+1}$};
            \node[roundnode, below=0.30cm of Yt1] (Xt1) {$X_{t+1}$};
            %\node[squarenode, right=0.5cm of Dt] (Dt1) {$D_{t}$};
            %\node[diamondnode, below=0.30cm of Dt1] (Ut1) {$U_{t}$};
        
            % Edges
            %\draw[->] (Yt) -- (Xt);
            \draw[dashed,bend right] (Yt) to [out=35,in=145] (Dt);
            \draw[bend left] (Xt) to [out=-35,in=215] (Ut);
            \draw[->] (Dt) -- (Ut);
            \draw[->] (Yt) -- (Xt);
            \draw[->] (Dt) -- (Xt1);
            %\draw[dashed,bend right] (Yt1) to [out=35,in=145] (Dt1);
            %\draw[bend left] (Xt1) to [out=-35,in=215] (Ut1);
            %\draw[->] (Dt1) -- (Ut1);
        
            % Black dots and arrows
            %\node[draw=none, left=0.4cm of Dt] (dots_left) {$\cdots$};
            %\draw[->] (dots_left) -- (Xt);
            %\node[draw=none, right=0.4cm of Xt1] (dots_right) {$\cdots$};
            %\draw[->] (Dt1) -- (dots_right);
    \end{tikzpicture}
        \caption{Compact representation\\ \textcolor{white}{.}}
        \label{fig:POMDPc}
    \end{subfigure}}
    \caption{On the left is an example of a non–time-homogeneous causal POMDP. If the causal model is faithful, that is, every edge in the causal graph corresponds to an actual direct dependency between two variables, then any change in the graph structure across timesteps implies a change in the transition function. In the center is a CID that is compatible with a time-homogeneous causal POMDP. On the right is a compact representation of a time-homogeneous causal POMDP compatible with the CID shown in the center.}
    \label{fig:POMDP}
\end{figure*}
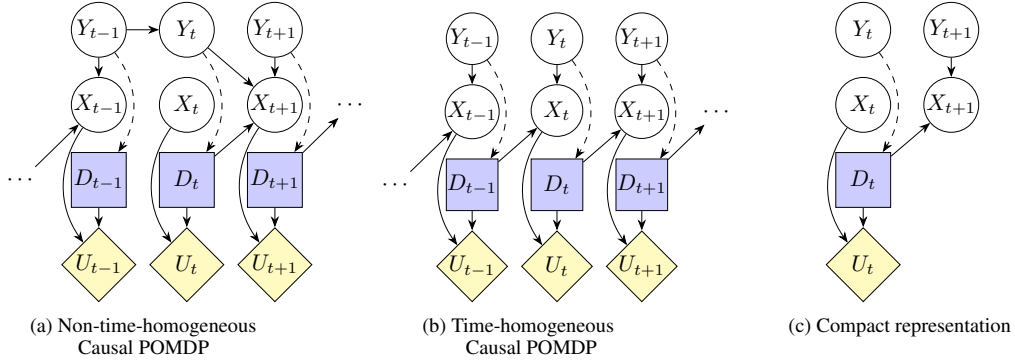

Let $X$ be a node in a graph $G$. In this paper we denote the set of parent nodes of $X$ as $Pa(X)$. Random variables are represented with upper-case letters, e.g. $X$, and the corresponding instantiations with lower-case letters, e.g. $x$. The set of observable values of a random variable $X$ is called $dom(X)$. Given a set $\mathbf X$, $\Pi(\mathbf X)$ is the set of all probability distributions over $\mathbf X$. \\ \\
\noindent \textbf{Factored POMDPs.} Partially Observable Markov Decision Processes (POMDPs) \citep{truePOMDP, POMDP} model sequential decision-making in which the agent lacks full observability of the environment. When each state in a POMDP is represented as a vector $s = (v_1, \dots, v_n)$ of instantiations of random variables $V_1, \dots, V_n$, the model is called a factored POMDP \citep{FactoredPOMDP}. This structure allows the transition function to be decomposed into sub-functions involving only subsets of the state variables.\\ \\
\noindent \textbf{Causal Influence Diagrams.} In this paper, we study factored POMDPs whose intra- and inter-temporal causal structure is expressed using Causal Influence Diagrams (CIDs) \citep{CIDHeckerman, Everitt_Carey_Langlois_Ortega_Legg_2021}. CIDs extend Causal Bayesian Networks (CBNs) \citep{causality} by incorporating decision-theoretic elements, offering a structured way to represent causal relationships while specifying what the agent observes and what it influences through its actions.
%With a slight abuse of notation, we will refer to both a node in a CBN and its corresponding variable using the same symbol. We also use $pa(X)$ when referring to an instantiation of the set of variables corresponding to nodes in $Pa(X)$. 

\begin{definition}[Causal Influence Diagram~\citep{Everitt_Carey_Langlois_Ortega_Legg_2021}]
A \textit{Causal Influence Diagram}
(CID) is a Causal Bayesian Network $M=(G=(\mathbf V,\mathbf E), P)$, where $P$ is a joint probability distribution compatible with the conditional independences encoded in the DAG $G$. The nodes in $\mathbf V$ are partitioned into decision ($\mathbf D$), utility ($\mathbf U$), and chance ($\mathbf C$) nodes, $\mathbf V = (\mathbf D, \mathbf U, \mathbf C)$. Each utility node $U_i$ is assigned a utility function $f_i : dom(Pa(U_i)) \to \mathbb{R}$. %TODO: Add an example CID
\end{definition}

 In a CID, the environment is described by the set of chance nodes $\mathbf C$. Each chance node $C\in \mathbf C$ corresponds to a random variable,  The set of decision nodes $\mathbf D$ contains all the variables which value is set by an agent, for each $D\in\mathbf D$ it is possible to assign a policy $\pi:dom(Pa(D))\to \mathbf{\mathcal{A}}$.

\section{Causal POMDPs}

There exist various causal representations of POMDPs, including models that integrate causal structure into online planning or representation learning. For example, CAR-DESPOT incorporates causal information into the AR-DESPOT planner to reason about confounding in robotic environments \citep{Cannizzaro}. Earlier work has proposed a causal POMDP model for causal representation learning and has shown its usefulness for zero-shot learning in complex tasks \citep{sontakke21a}. Other approaches use causal modeling to recover hidden causal dynamics within partially observable environments \citep{ijcai}, or employ causal abstractions to improve long-horizon reasoning \citep{causalDreamer}.

In this work, we adapt the formulation of \citet{ceriscioli2025robust}, which models a POMDP as an infinite CID unrolled over time containing a decision node and a utility node at each timestep $t$, corresponding respectively to the action and reward at time $t$.

\begin{definition}[Causal POMDP]
    A causal POMDP is a tuple $(\mathbf V,\mathbf{\mathcal{A}},\mathbf T, R, \mathbf V_o, O,\gamma)$ where:
    \begin{enumerate}
        \item $\mathbf V=(V_1,\dots,V_n)$ is the ordered set of state variables. 
        \item $\mathbf{\mathcal{A}}$ is the finite set of actions.
        \item $T: dom(\mathbf V)\times \mathbf{\mathcal{A}}\to \Pi(dom(\mathbf V))$ is the state-transition function. Given $\mathbf V^{(t)}$ the set of state variables before the transition, and $\mathbf V^{(t+1)}$ the set of state variables after the transition, it is possible to decompose $T$ as follows:
        \begin{equation}
            T(v^{(t)},a^{(t)},v^{(t+1)})=\prod_{i=1}^nT_i(v_i^{(t+1)}\mid a^{(t)},pa(v_i^{(t+1)}))
        \end{equation}
         with $Pa(V_i^{(t+1)})\subseteq \mathbf V^{(t)}\cup(\bigcup_{k<i}V^{(t+1)}_k)$. Each function $T_i$ governs the transition of the state variable $V_i$.
        %\item $\mathbf T=\{T_1^{(t)},\dots,T_n^{(t)}\}_t$ is a sequence of sets of transition functions s.t. $T_i^{(t)}: dom(V_{T_i}^{(t)})\times\mathbf{\mathcal{A}} \to \Pi(dom(V_i))$ where $\mathbf V_{T_i}^{(t)}\subseteq\mathbf V$ and $\Pi(dom(V_i))$ is the set of probability distributions over $dom(V_i)$.
        \item $R:\mathbf V_R\times \mathbf{\mathcal{A}}\to \mathbb R$ is a reward function, where $\mathbf V_{R}\subseteq\mathbf V$.
        \item $\mathbf V_o=\{V_{o,1},\dots, V_{o,p}\}$ is a set of observable variables, forming the agent's observation.
        \item $\mathbf O=\{O_1,\dots,O_p\}$ $O:dom(\mathbf V) \times \mathbf{\mathcal{A}}\times dom(\mathbf V_o)\to [0,1]$ is the set of conditional observation probabilities.
        \item $\gamma\in[0,1)$ is the discount factor.
    \end{enumerate}
\end{definition}
\noindent The set of state variables induces the set of states $\mathbf S\coloneqq dom(\mathbf V)=dom(V_1)\times\dots\times dom(V_n)$. Similarly, the set of observations is $\mathbf\Omega\coloneqq dom(\mathbf V_o)$. The agent receives a reward $R(s,a)$ after taking an action $a\in\mathbf{\mathcal{A}}$ in state $s\in \mathbf S$. \\ \\
Observable variables can be included in the state ($\mathbf V_o \subseteq \mathbf V$) without loss of expressiveness: for any causal POMDP where $\mathbf V_o\cap \mathbf V=\emptyset$, there exists an equivalent causal POMDP with $\mathbf V_o' \subseteq \mathbf V$ where transitions and rewards do not depend on $\mathbf V_o$.
\begin{assumption}\label{ass:POMDP1} 
The observable variables are also state variables, i.e. $\mathbf V_o\subseteq \mathbf V$.
\end{assumption}
When Assumption~\ref{ass:POMDP1} holds, the state fully determines the observation. Therefore, the conditional observation probability function simplifies to $O(v',a,v_o')=1$ if $v_o'$ is compatible with $v'$ and $0$ otherwise, are the new state and observation after action $a$. Since $v'$ determines $v'_o$, we can write $O(v',v'_o)$ instead of $O(v',a,v'_o)$.

Under Assumption~\ref{ass:POMDP1}, a causal POMDP $(\mathbf V,\mathbf{\mathcal{A}},\mathbf T, R, \mathbf V_o, O,\gamma)$ induces a CID with an infinite DAG $G=(\mathbf V',E)$, where $\mathbf V = \bigcup_{t=0}^\infty \mathbf V^{(t)}$ and $\mathbf V^{(t)} \coloneqq \mathbf V$ (see Figure~\ref{fig:POMDPb}). Time-homogeneity of the transition function allows a compact CID representation including only two consecutive timesteps $t$ and $t+1$, with decisions and utilities at $t$, and edges involving $D$ and $U$, between timesteps, and within $t+1$ (see Figure~\ref{fig:POMDPc}). If all state variables are observable, this reduces to a causal MDP.
%Under Assumptions \ref{ass:POMDP1} a causal POMDP $(\mathbf V,\mathbf{\mathcal{A}},\mathbf T, R, \mathbf V_o, O,\gamma)$ induces a CID composed of an infinite DAG $G=(\mathbf V',E)$ where the set of nodes $\mathbf V\coloneqq\bigcup_{t=0}^\infty \mathbf V^{(t)}$ with $\mathbf V^{(t)}\coloneqq\mathbf V$. An example of such CID is provided in Figure~\ref{fig:POMDPb}. At the same time, the time-homogeneity implied by the fact that the transition function does not depend on the timestep allows for a compact CID representation that only includes the nodes of two consecutive timesteps $t$ and $t+1$, with the decision and utility in time $t$ and the edges involving $D,U$, those between timesteps, and those within timestep $t+1$. An example of such a compact representation is provided in Figure~\ref{fig:POMDPc}. If all state variables are observable, then this results in a causal Markov Decision Process (MDP) as a special case of a causal POMDP.

\begin{comment}
    A Causal POMDP is a CID $M=(G=(\mathbf V,\mathbf E),P)$ with a single decision node $D$ and a single utility node $U$. The set of nodes $\mathbf V=\{V_1,\dots,V_n\}$ can be partitioned in $\mathbf V=\mathbf V^{(t)}\cup \mathbf V^{(t+1)}$ with $D,U\in \mathbf V^{(t)}$. The set of states $S=dom(\mathbf V^{(t+1)})$. The transition function is $T(s,a,s')=\prod_{V_i\in \mathbf V^{(t+1)}}P(V_i\mid pa(V_i))$.
\end{comment}

%\begin{assumption}\label{ass:POMDP2}
%    Time-homogeneity, i.e. the state transition probabilities do not change over time.
%\end{assumption}
Distribution shifts generally alter the state transition function, and if the distribution shift is unknown, then estimating it requires planning using a non-time-homogeneous POMDP, non-time-homogeneous causal POMDPs can also be represented with infinite CIDs, as illustrated in Figure~\ref{fig:POMDPa}.

\section{Planning under Distribution Shifts}\label{sec:uncertainty}
Being able to discern the causal relationships governing the environment allows us to infer the state dynamics under changing conditions. In a causal model, \textit{interventions} are deliberate alterations of components or mechanisms of the model.
A \textit{distribution shift} is any change in the probability distribution of a random variable. In this paper, we focus on shifts that affect the variables describing the environment in which we plan, specifically, the variables used to factorize the POMDP state in the causal POMDP model. A \textit{domain} denotes the probabilistic configuration of the environment under a particular shift, including the original one.

In this section, we develop the use of causal POMDPs for planning under potential distribution shifts. We first formalize distribution shifts as interventions within the underlying causal model. A key consequence of modeling distribution shifts as interventions is that they can be embedded directly into the agent’s belief and planning process. We proceed by describing how to evaluate a policy under a specified distribution shift. Finally, we address the problem of planning while concurrently identifying the distribution shift, showing how to update the agent’s belief over states and domains and that the resulting belief value function remains piecewise linear and convex.

\subsection{Modeling Distribution Shifts as Interventions}

As shown in previous work \citep{richens2024robust, ceriscioli2025robust}, interventions are an effective way to represent distribution shifts in a causal model.  Applying an intervention $\sigma$ to a set of variables may change the joint distribution $P(\mathbf V)$. When the model and intervention are known, it is possible to compute the updated distribution $P(\mathbf V;\sigma)$ which is called the \textit{interventional distribution}, opposed to the \textit{observational distribution} $P(\mathbf V)$, which is the one we observe when no intervention is applied.

\noindent We introduce stochastic shifts, a family of soft interventions.
\begin{definition}[Stochastic Shift Intervention]
    Let $M=(G,\Theta)$ be a CBN, and $X$ be a discrete random variable with $dom(X)=\{x_1,\dots,x_m\}$. A \textit{Stochastic Shift Intervention} $\sigma$ is an intervention associated with a matrix:
    \begin{equation}
    A_\sigma=\left(
        \begin{matrix}
p_{11} & \dots & p_{1m}\\
\vdots & \ddots & \vdots\\
p_{m1} & \dots & p_{mm}
\end{matrix}\right ) \quad\text{with $\sum_jp_{ij}=1$ for every $i$.}
    \end{equation} 
    such that when applied to a random variable $X$, its conditional distribution is updated as follows:
    \begin{equation}\label{eq:stochasticshift}
        P(X=x_j\mid pa(X);\sigma)=\sum_{i=1}^mp_{ij}P(X=x_i\mid pa(X))
    \end{equation}
\end{definition}

The condition $\sum_jp_{ij}=1$ is equivalent to requiring that each row in $A_\sigma$ sums up to one. Observe that if the probability distribution of $X$ is represented with a probability vector, e.g. $P(X|pa(X))=(P(x_1\mid pa(X)),\dots,P(x_m\mid pa(X)))^T$, then we can apply the stochastic shift intervention $\sigma$ by matrix multiplication, i.e. 
\begin{equation}
    P(X;\sigma)=A_\sigma^T P(X)
\end{equation} 
$A_\sigma$ is the identity matrix iff $\sigma$ is the identity intervention $\sigma_{id}$,  which leaves the domain and distribution unchanged.\\\\
\textbf{Example.} Let $X\sim Unif(\{1,2,3\})$ and $\sigma$ be a stochastic shift s.t. $p_{11}=p_{22}=1$, and $p_{31}=p_{32}=\frac{1}{2}$. Each time $X$ takes the value $3$, $\sigma$ remaps it to $1$ or $2$ with equal probability. So $P(X\!=\!x;\sigma)\!=\!\frac{1}{2}$ if $x\!\in\!\{1,2\}$ and $0$ if $x=3$.\\ \\
A stochastic shift intervention $\sigma$ maps probability distributions over a finite set to other distributions. The following result shows that, for any starting distribution, there exists a $\sigma$ that maps it to any target distribution.
\begin{prop}
    Given a conditional distribution $P(X\mid pa(X))$ and an arbitrary target conditional distribution $P'(X\mid pa(X))$, it is possible to define a stochastic shift intervention $\sigma$ s.t. $P'(X\mid pa(X))=P(X\mid pa(X);\sigma)$.
\end{prop}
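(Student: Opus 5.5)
The plan is a direct construction: take the shift matrix whose rows are all equal to the target distribution. Fix a parent configuration $pa(X)$ and write $q_j \coloneqq P'(X=x_j\mid pa(X))$ for $j=1,\dots,m$. Define $A_\sigma$ by $p_{ij}\coloneqq q_j$ for every $i$, so that every row of $A_\sigma$ equals the target vector $(q_1,\dots,q_m)$.

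First, $A_\sigma$ is a valid stochastic shift matrix. Its entries are nonnegative, and each row sums to $\sum_j q_j=1$ because $P'(X\mid pa(X))$ is a distribution. Next, substitute into \eqref{eq:stochasticshift}:
\begin{equation*}
P(X=x_j\mid pa(X);\sigma)=\sum_{i=1}^m q_j\,P(X=x_i\mid pa(X)) = q_j\sum_{i=1}^m P(X=x_i\mid pa(X)) = q_j .
\end{equation*}
The last step uses that the original conditional distribution also sums to one. Hence $P(X\mid pa(X);\sigma)=P'(X\mid pa(X))$. In matrix form, $A_\sigma^T=q\mathbf 1^T$, so $A_\sigma^T P(X\mid pa(X))=q$ for any probability vector $P(X\mid pa(X))$. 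The construction therefore does not depend on the starting distribution at all.

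The main obstacle is the dependence on the parents. If the target $P'(X\mid pa(X))$ varies with $pa(X)$, a single matrix independent of $pa(X)$ generally cannot work. For example, two parent configurations may share the same original distribution but have different targets, and then no fixed matrix can send that one vector to two different outputs. I would resolve this by reading the intervention as specifying a matrix $A_\sigma^{(pa)}$ for each parent configuration, which is consistent with \eqref{eq:stochasticshift} being stated pointwise in $pa(X)$. Each $A_\sigma^{(pa)}$ is then built as above, and the identity holds for every $pa(X)$. If instead a single global matrix is required, the same construction still works whenever the target does not depend on $pa(X)$, since the rows-equal-to-target matrix maps every input distribution to $q$. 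I would state that case as a remark.
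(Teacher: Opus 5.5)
Your proof is correct and is essentially the paper's. The paper also sets $p_{ji}=P'(X=x_i\mid pa(X))$, so every row of $A_\sigma$ equals the target, and concludes using that the original conditional distribution sums to one; your remark on the dependence on $pa(X)$ makes explicit what the paper leaves implicit, since its $p_{ji}$ likewise vary with the parent configuration, exactly as in your per-configuration reading.
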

\noindent Note that even if the original distribution $P(X\mid pa(X))$ and the shifted distribution $P(X\mid pa(X);\sigma)$ are both observed, in the general case it is not always possible to uniquely determine the distribution shift $\sigma$, as it is possible that two different shifts applied to the same distribution generate the same distribution. Suppose $X$ is the outcome of a fair coin flip, i.e., $X\sim Bern(\frac{1}{2})$, also suppose that after altering the coin we observe $Bern(\frac{3}{4})$ as the new distribution, then both
\begin{equation}
    A_\sigma=\left (\begin{matrix}
        1 & 0 \\ 0.5 & 0.5
    \end{matrix}\right)
    \qquad A_{\sigma'}=\left (\begin{matrix}
        0.5 & 0.5 \\ 1 & 0
    \end{matrix}\right)
\end{equation}
correspond to shifts that map $Bern(\frac{1}{2})$ to $Bern(\frac{3}{4})$.

\subsection{Evaluating Policies under Distribution Shifts}
% Define state-value function, state-action value function for states and beliefs

Let $M=(\mathbf V,\mathbf{\mathcal{A}},\mathbf T, R, \mathbf V_o, O,\gamma)$ be a causal POMDP, $\pi:dom(\mathbf V_o)\to \mathbf{\mathcal{A}} $ a policy, and $\sigma$ a stochastic shift intervention representing a distribution shift. When planning under partial observability, the state is generally not completely available to the agent and therefore it maintains a belief $b_S$ about the state that updates whenever it receives an observation \citep{POMDP}. Evaluating a policy consists in computing its expected return, represented by the belief value function $V^{\pi}(b)$ using the Bellman equation for causal POMDPs:

\begin{equation}\label{eq:policystate}
   V^\pi(b_S)= \sum_{s\in S}R(s,\!\pi(b_S))b_S(s)+\gamma\!\sum_{s',o'}\!O(s'\!,o')\!\sum_{s}b_S(s)\!\!\!\!\!\!\!\!\prod_{V_i'\in\mathbf V^{(t+1)}}\!\!\!\!\!\!\!\!T_i(v'_i\!\mid\! \pi(b_S), pa(V'_i))V^\pi(b_S^{\pi(b_S),o'}\!)
\end{equation}

% Here s contains some of the values of the instantiations of Pa(V_i) and s'=(v_1,...,v_n)

\noindent\textbf{Known shift $\sigma$.}  We write $V^\pi(b_S; \sigma)$ to denote the value function when the environment is affected by the shift $\sigma$. Its expression is identical to that in Equation~\ref{eq:policystate}, except that each transition term $T_i(v_i \mid \pi(b_S), pa(v_i))$  is replaced by  $T_i(v_i \mid \pi(b_S), pa(v_i); \sigma)$. Note that if $\sigma=\sigma_{id}$ then $V^{\pi}(b_S;\sigma)=V^{\pi}(b_S)$. 
Once the shift $\sigma$ is fixed and the transition functions are updated via Equation~\ref{eq:stochasticshift}, the problem of computing the value function of a causal POMDP reduces to that of a standard POMDP.
%In particular, for a known $\sigma$, the state-value function can be expressed as a linear combination of $\alpha$-vectors:

%\begin{equation}
  %  V^\pi(b_S;\sigma)=\sum_{s\in S}b(s)V^\pi(s)
%\end{equation}

%\subsection{Finding a Robust Policy for a Given Set Of Distribution Shifts}

%A reason for this is that it would be too expensive to find a policy after deployment? Maybe when in general it is not feasible to update belief about domain?

\subsection{Planning under Unknown Distribution Shifts}

Now we consider the task of planning in an environment affected by an unknown distribution shift.

\subsubsection{State and Domain Estimation.}As part of planning with causal POMDP under an unknown distribution shift, it is no longer sufficient for the agent to keep track of its belief about the state, as it also needs to keep a belief about the unknown domain. It is possible to define a prior on the state and the domain separately as $b_S(s)$ and $b_\Sigma(\sigma)$, however, since in the general case at each timestep the observation depends both on the previous state and the domain, the state belief and the domain belief become coupled and therefore it is appropriate to keep track of them using a joint belief $b(s,\sigma)$. If we have no prior knowledge we can express that as uniform priors on both the states and the domain $b(s,\sigma)=b(\sigma|s)b(s)=b_\Sigma(\sigma)b_S(s)$ where $b_S(s)\sim \text{Unif}(S)$, and $b_\Sigma(\sigma)\sim \text{Unif}(\Sigma)$.

The following proposition shows that the joint belief over states and domains admits a Bayesian update analogous to the standard POMDP filter, but extended to account for domain-dependent transition dynamics.

\begin{prop}\label{propo:belief} [State-Domain Joint Belief Update]
Let $b$ be the current joint belief over states and domains, $a$ be the action taken at time $t$, and $o'$ the observation received after performing $a$. Then the updated state-domain joint belief $b'$ is:
\begin{equation}\label{eq:belief}
\begin{split}
    b'_{o',a}(s',\sigma)=\frac{O(s',o')}{P(o'\mid a,b)}\sum_s b(s,\sigma)\!\!\!\!\!\!\prod_{V_i\in\mathbf{V^{(t+1)}}}\!\!\!\!\!T(v_i\!\mid\! pa(V_i);\sigma)
\end{split}  
\end{equation}
\end{prop}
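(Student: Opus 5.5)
The plan is to derive the update directly from Bayes' rule on the augmented hidden variable $(S,\Sigma)$. The domain $\sigma$ is treated as a static latent variable: it does not change between timesteps, so its transition is the identity. The joint belief $b(s,\sigma)$ is the posterior $P(s,\sigma\mid h)$ given the history $h$ of past actions and observations. We then want $b'_{o',a}(s',\sigma)=P(s',\sigma\mid h,a,o')$.

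First, Bayes' rule gives
\begin{equation*}
b'_{o',a}(s',\sigma)=\frac{P(o'\mid s',\sigma,a,h)\,P(s',\sigma\mid a,h)}{P(o'\mid a,h)}.
\end{equation*}
Under Assumption~\ref{ass:POMDP1}, the observation is a deterministic function of the new state. Hence $P(o'\mid s',\sigma,a,h)=O(s',o')$, independent of $\sigma$, $a$ and $h$. The denominator is written $P(o'\mid a,b)$, since the history enters only through the current belief.

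Second, we expand the prediction term by marginalising over the previous state:
\begin{equation*}
P(s',\sigma\mid a,h)=\sum_s P(s'\mid s,\sigma,a)\,P(s,\sigma\mid h),
\end{equation*}
where $P(s,\sigma\mid h)=b(s,\sigma)$. This uses two facts. The first is the Markov property of the (intervened) causal POMDP: given $s$, $a$ and the domain, $s'$ is independent of the past. The second is that $\sigma$ persists, so that $P(\sigma\mid s,\sigma,a)=1$. It also uses that the action is chosen from the history and so carries no extra information about $(s,\sigma)$ beyond $b$.

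Third, the CID factorisation of the causal POMDP gives $P(s'\mid s,\sigma,a)=\prod_{V_i\in\mathbf V^{(t+1)}}T_i(v_i'\mid a,pa(V_i');\sigma)$. Here each factor is the shifted conditional of Equation~\ref{eq:stochasticshift}, or the original one if $\sigma$ leaves $V_i$ untouched. The parents of $V_i'$ take their values from $s$ and from the earlier components of $s'$. Substituting yields Equation~\ref{eq:belief}. The normaliser is
\begin{equation*}
P(o'\mid a,b)=\sum_{s',\sigma}O(s',o')\sum_s b(s,\sigma)\prod_i T_i(v_i'\mid a,pa(V_i');\sigma).
\end{equation*}

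The main obstacle is not algebraic. It is making precise the modelling assumption that the intervention is a time-invariant latent node with no incoming edges, so that it can be conditioned on like a state component. We would state this explicitly: the augmented state $(s,\sigma)$ defines an ordinary POMDP with transition $T((s,\sigma),a,(s',\sigma'))=\mathbb{1}[\sigma'=\sigma]\prod_i T_i(\cdot;\sigma)$. The proposition is then just the standard POMDP belief filter applied to this augmented POMDP, summed over $\sigma'$ to collapse the indicator.
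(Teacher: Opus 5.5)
Your proof is correct and follows the paper's argument essentially step for step. Both apply Bayes' rule to $(s',\sigma)$, replace the likelihood by $O(s',o')$ because the observation depends only on $s'$, marginalise over the previous state, and invoke the Markov property together with the fact that $a$ carries no information about $(s,\sigma)$ beyond $b$, before substituting the factorised shifted transition. Conditioning on the history $h$ rather than on $b$, and stating explicitly that $\sigma$ is a static latent with identity dynamics (which the paper uses only implicitly), are clarifications rather than a different route.
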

such that $s'=(v_1,\dots,v_n)$ for $\{V_1,\dots,V_n\}=\mathbf V^{(t+1)}$.

\subsection{Preservation of PWLC under Distribution Shift}

We establish that, despite the presence of an unknown shift, the structural properties that support tractable POMDP planning remain intact.

\paragraph{Value Function for Casual POMDPs under an Unknown Shift.} To demonstrate that the value function of a causal POMDP subject to a distribution shift and planning horizon $n$ is piecewise linear and convex, and that it admits a finite vector representation analogous to that of standard POMDPs, we propose an approach similar to \citet{PortaRobot} and present a constructive proof based on the state–action value function.\\ \\ %that identifies a recursive procedure to compute the $\alpha$-vectors.
The following lemma shows that the value function under an unknown distribution shift can still be expressed as a maximum over linear functionals of the joint belief.

\begin{lemma}\label{lem:ind}
    Let $\Sigma$ be a set of stochastic shift interventions, the value function of a causal POMDP with set of states $\mathbf S$ subject to an unknown distribution shift in the set $\Sigma$ with planning horizon $n$ can be expressed as:
    \begin{equation}
        V_n(b)=\max_{\{\alpha_n^i\}_i}\sum_{s\in \mathbf S}\int_{\Sigma}\alpha^i_n(s,\sigma)b(s,\sigma)\,d\sigma
    \end{equation}
    for some functions $\alpha_n^i:S\times\Sigma\to \mathbb R$.
\end{lemma}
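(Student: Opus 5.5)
I will prove the claim by induction on the horizon $n$, following the constructive scheme of \citet{PortaRobot}. The key step is to treat the pair $(s,\sigma)$ as an augmented hidden state and apply the standard $\alpha$-vector backup, with the belief update of Proposition~\ref{propo:belief} supplying the transition kernel on this augmented space. The unknown shift is constant within an episode, so the domain component evolves trivially: the augmented transition from $(s,\sigma)$ to $(s',\sigma')$ under action $a$ is $\delta(\sigma'=\sigma)\prod_i T_i(v_i'\mid a,pa(V_i');\sigma)$. The observation depends only on $s'$, through $O(s',o')$.

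\textbf{Base case.} For $n=1$ the value is the expected immediate reward,
$V_1(b)=\max_a \sum_s\int_\Sigma R(s,a)\,b(s,\sigma)\,d\sigma$.
So the family $\alpha_1^a(s,\sigma)\coloneqq R(s,a)$, indexed by $a\in\mathcal A$, works, since the reward does not depend on the shift. If one prefers the convention $V_0\equiv 0$, take $\alpha_0\equiv 0$ and let the base case be trivial.

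\textbf{Inductive step.} Assume $V_{n-1}(b)=\max_j\sum_s\int_\Sigma \alpha^j_{n-1}(s,\sigma)b(s,\sigma)\,d\sigma$. Write the Bellman recursion on joint beliefs:
\begin{equation*}
V_n(b)=\max_a\Big[\sum_s\int_\Sigma R(s,a)b(s,\sigma)\,d\sigma+\gamma\sum_{o'}P(o'\mid a,b)\,V_{n-1}(b'_{o',a})\Big].
\end{equation*}
Substituting Equation~\ref{eq:belief} and the inductive hypothesis, the factor $P(o'\mid a,b)$ cancels against the normaliser of $b'_{o',a}$. Each summand then becomes
\begin{equation*}
\max_j \sum_{s'}\int_\Sigma \alpha^j_{n-1}(s',\sigma)\,O(s',o')\sum_s b(s,\sigma)\prod_i T_i(v'_i\mid a,pa(V'_i);\sigma)\,d\sigma .
\end{equation*}
Exchanging the finite sums and the integral, this equals $\max_j\sum_s\int_\Sigma g^j_{a,o'}(s,\sigma)\,b(s,\sigma)\,d\sigma$, where
\begin{equation*}
g^j_{a,o'}(s,\sigma)\coloneqq\sum_{s'}O(s',o')\prod_i T_i(v'_i\mid a,pa(V'_i);\sigma)\,\alpha^j_{n-1}(s',\sigma).
\end{equation*}
Since the inner sum over $o'$ is a sum of maxima of linear functionals, it equals the maximum over all choice maps $j(\cdot):\Omega\to\{1,\dots,|\Gamma_{n-1}|\}$ of linear functionals. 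This gives the new family
\begin{equation*}
\alpha_n^{a,j(\cdot)}(s,\sigma)=R(s,a)+\gamma\sum_{o'}g^{j(o')}_{a,o'}(s,\sigma),
\end{equation*}
indexed by $a$ and $j(\cdot)$, and $V_n(b)$ is the maximum over this family. The family stays finite because $\mathcal A$ and $\Omega$ are finite.

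\textbf{Main obstacle.} The delicate part is the bookkeeping that makes the normaliser cancel and justifies exchanging the sum and the integral. The cancellation holds for observations with $P(o'\mid a,b)>0$; observations with zero probability contribute zero on both sides, so the equality holds for every $b$. The exchange over $\Sigma$ needs measurability and boundedness of $\sigma\mapsto T_i(\cdot;\sigma)$. This holds because stochastic shifts act linearly through $A_\sigma$, so $T_i(\cdot;\sigma)$ is bounded in $[0,1]$, and $\alpha$ is bounded by induction, by roughly $\sum_{k<n}\gamma^k\max|R|$. Dominated convergence therefore applies, and for finite $\Sigma$ the integral is just a sum. A secondary point I will flag is that the transition must leave $\sigma$ fixed. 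This is exactly why the $\alpha$-functions depend on $\sigma$ only pointwise, and why the convex-combination structure, and hence PWLC, survives.
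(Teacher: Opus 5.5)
Your proposal is correct and follows the paper's proof. Both argue by induction on the horizon, substitute the joint-belief update of Proposition~\ref{propo:belief} so that $P(o'\mid a,b)$ cancels, and back-project to $\alpha^j_{a,o'}(s,\sigma)=\sum_{s'}O(s',o')P(s'\mid s,a,\sigma)\alpha^j_{n-1}(s',\sigma)$. Your final step is cleaner than the paper's: it indexes the new family by $(a,j(\cdot))$ with $j:\Omega\to\{1,\dots,|\Gamma_{n-1}|\}$, whereas the paper picks $\alpha_{a,o',b}$ by an argmax that depends on $b$, so your version makes explicit that the family is finite and independent of $b$, which the PWLC theorem relies on.
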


\noindent Using the characterization of $V_n(b)$ provided by Lemma~\ref{lem:ind}, the following theorem completes our analysis by proving that the value function under distribution shifts is still piecewise linear and convex, thereby preserving the structural form that underlies $\alpha$-vector–based planning.

\begin{theorem}
    Let $\Sigma$ be a set of stochastic shift interventions, the value function of a causal POMDP subject to an unknown distribution shift in the set $\Sigma$ with planning horizon $n$ is piecewise linear and convex in the joint state-domain belief $b$.
\end{theorem}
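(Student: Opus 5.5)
The plan is to derive the theorem directly from Lemma~\ref{lem:ind}, combined with a finiteness argument on the set of $\alpha$-functions. By Lemma~\ref{lem:ind}, $V_n(b)=\max_i \langle \alpha_n^i, b\rangle$, where $\langle \alpha, b\rangle \coloneqq \sum_{s\in\mathbf S}\int_\Sigma \alpha(s,\sigma)b(s,\sigma)\,d\sigma$. Each map $b\mapsto\langle\alpha_n^i,b\rangle$ is linear in $b$, because it is a sum and integral of $b$ weighted by a fixed function. A pointwise maximum of linear functionals is convex: if $b=\lambda b_1+(1-\lambda)b_2$, then $\langle\alpha^i,b\rangle=\lambda\langle\alpha^i,b_1\rangle+(1-\lambda)\langle\alpha^i,b_2\rangle\le \lambda V_n(b_1)+(1-\lambda)V_n(b_2)$ for every $i$, and taking the maximum over $i$ gives convexity. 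This step is routine.

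Piecewise linearity additionally requires that the family $\{\alpha_n^i\}_i$ be finite, or at least that the maximum be attained by finitely many members. I would establish this by induction on $n$, following the constructive recursion behind Lemma~\ref{lem:ind}. For $n=1$, $V_1(b)=\max_{a\in\mathcal A}\sum_s\int_\Sigma R(s,a)b(s,\sigma)\,d\sigma$, so there are $|\mathcal A|$ functions $\alpha_1^a(s,\sigma)=R(s,a)$. For the inductive step, substitute the belief update of Proposition~\ref{propo:belief} into the Bellman equation. The normalizer $P(o'\mid a,b)$ cancels against the observation probability, which gives
\begin{equation*}
\alpha_{n+1}^{a,\{i_{o'}\}}(s,\sigma)=R(s,a)+\gamma\sum_{o'}\sum_{s'}O(s',o')\prod_{V_i}T_i(v_i'\mid a,pa(V_i');\sigma)\,\alpha_n^{i_{o'}}(s',\sigma).
\end{equation*}
The important structural fact is that the domain $\sigma$ is static: it is carried through unchanged from $b$ to $b'$, so no integration over $\sigma$ mixes domains. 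The maximization over choices of $i_{o'}$ can therefore be pulled outside as a maximum of linear functionals. Since $\mathcal A$ and $\mathbf\Omega$ are finite, the number of choices is at most $|\mathcal A|\,|\Gamma_n|^{|\mathbf\Omega|}$, where $\Gamma_n$ is the set of $\alpha$-functions at horizon $n$. This keeps the set finite at every horizon.

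With finitely many linear pieces, the belief simplex (here, the space of densities over $\mathbf S\times\Sigma$) is partitioned into finitely many regions $\{b : \langle\alpha_n^i,b\rangle\ge\langle\alpha_n^j,b\rangle\ \forall j\}$. Each region is convex, being an intersection of half-spaces, and $V_n$ is linear on each of them. This gives the PWLC property.

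The main obstacle is the inductive step, specifically justifying the exchange of the maximum over successor $\alpha$-functions with the sum over observations. Per observation, the choice $i_{o'}$ depends on $b$ only through the unnormalized updated belief, which is linear in $b$. This requires checking that $P(o'\mid a,b)$ cancels exactly, including when $P(o'\mid a,b)=0$, where the corresponding term can simply be dropped. A secondary subtlety arises when $\Sigma$ is a continuum. The $\alpha$-functions remain finitely many functions of $(s,\sigma)$, so $V_n$ stays a finite maximum of linear functionals, but each piece is linear in an infinite-dimensional belief. I would state PWLC in this sense, noting that it reduces to the usual finite-vector statement when $\Sigma$ is finite.
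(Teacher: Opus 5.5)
Your proposal is correct and follows essentially the same route as the paper. You invoke Lemma~\ref{lem:ind} to write $V_n$ as a maximum of functionals $b\mapsto\sum_{s}\int_\Sigma\alpha_n^i(s,\sigma)b(s,\sigma)\,d\sigma$, note that each is linear in $b$, and conclude convexity from the pointwise maximum and piecewise linearity from the finiteness of the family. Your explicit enumeration over tuples $(a,\{i_{o'}\}_{o'})$, which gives at most $|\mathcal A|\,|\Gamma_n|^{|\mathbf\Omega|}$ functions, is a real tightening: the paper only asserts finiteness, and its Lemma builds the new $\alpha$-functions from the $b$-dependent argmax $\alpha_{a,o',b}$.
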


\noindent This confirms that distribution shifts do not compromise the representational tractability of the planning problem.

%\section{Empirical Validation}

\begin{comment}
    \subsubsection{Related work on adaptive POMDP models.} Bayesian adaptive formulations of POMDPs have been proposed to capture uncertainty in state-transition and observation models \cite{BAPOMDP}, including extensions that exploit factorized state representations \cite{FactoredBAPOMDP}. These methods treat the unknown transition dynamics as latent parameters to be inferred from data obtained through interactions and seek policies that balance learning these parameters with acting optimally. While in the factored version it is shown how to learn an appropriate graph as part of a Bayesian network encoding statistical independencies, there is no guarantee that the learned Bayesian network is causal, and therefore it is not appropriate to use this model to reason about perturbations in the data-generating process as domain shifts. In contrast, the problem address in this paper is conceptually distinct: rather than estimating fixed but unknown transition parameters, we consider the planning problem where the causal factorization and the transition model is known in a base domain, and then consider domain shifts where the underlying data-generating mechanism itself changes. Such shifts introduce a causal perturbation to the environment, requiring methods that explicitly account for alterations in the causal mechanisms governing state transitions and observations.
\end{comment}

\section*{Conclusions}
Recognizing distribution shifts as a critical source of uncertainty in planning, this work introduces a framework for decision-making under such shifts using causal POMDPs. By representing the state in a factorized form and distribution shifts as interventions on the underlying causal model, causal POMDPs enable both the evaluation of policies under specified shifts and planning in an unknown domain by maintaining a joint belief over states and domains. We show that, even under the uncertainty in the transition dynamics that arises from an unknown distribution shift, the finite-horizon value function remains piecewise linear and convex with respect to the joint state–domain belief, thereby preserving the structural properties that support $\alpha$-vector–based planning in standard POMDPs.

\newpage

\bibliography{aaai2026}

\newpage

\appendix

\section{Proof}\label{app:proofs}
\setcounter{prop}{0}
\setcounter{lemma}{0}
\setcounter{theorem}{0}
\begin{prop}
    Given a conditional distribution $P(X\mid pa(X))$ and an arbitrary target conditional distribution $P'(X\mid pa(X))$, it is possible to define a stochastic shift intervention $\sigma$ s.t. $P'(X\mid pa(X))=P(X\mid pa(X);\sigma)$.
\end{prop}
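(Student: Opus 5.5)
The plan is to build the matrix $A_\sigma$ explicitly, by choosing every row of $A_\sigma$ to be the target distribution. Fix an instantiation $pa(X)$ of the parents. Write $q_j \coloneqq P'(X=x_j\mid pa(X))$ and $r_i \coloneqq P(X=x_i\mid pa(X))$ for $j,i=1,\dots,m$. Define $p_{ij}\coloneqq q_j$ for all $i,j$. Each row of $A_\sigma$ is then the probability vector $(q_1,\dots,q_m)$, so $\sum_j p_{ij}=\sum_j q_j = 1$. Hence $A_\sigma$ satisfies the requirement of the definition of a stochastic shift intervention.

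Next I verify the update rule of Equation~\ref{eq:stochasticshift}:
\begin{equation*}
P(X=x_j\mid pa(X);\sigma)=\sum_{i=1}^m p_{ij}\, r_i = q_j \sum_{i=1}^m r_i = q_j = P'(X=x_j\mid pa(X)).
\end{equation*}
The only fact used is that $P(\cdot\mid pa(X))$ sums to one. In matrix form, $A_\sigma = \mathbf 1 q^T$ and $A_\sigma^T r = q\,(\mathbf 1^T r) = q$. The construction is therefore independent of the starting distribution $r$, so it needs no invertibility or support assumptions on $P$.

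The main obstacle is the dependence on the parent configuration. A single matrix $A_\sigma$ shared across all values of $pa(X)$ cannot in general reach an arbitrary target conditional. For example, if $P(X\mid pa(X))$ does not vary with $pa(X)$ but $P'$ does, then $A_\sigma^T r$ is the same for every $pa(X)$ while the targets differ. I would therefore make explicit that the proposition is read one parent instantiation at a time: either the statement concerns a fixed $pa(X)$, or $\sigma$ is allowed to carry a matrix $A_\sigma^{pa(X)}$ for each parent instantiation, which is consistent with Equation~\ref{eq:stochasticshift} being stated conditionally on $pa(X)$. Under that reading, applying the rank-one construction separately for each $pa(X)$, with rows equal to $P'(\cdot\mid pa(X))$, gives $P(X\mid pa(X);\sigma)=P'(X\mid pa(X))$ for every $pa(X)$, which proves the claim.
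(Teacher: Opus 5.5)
Your construction is exactly the paper's: it sets $p_{ji}=P'(X=x_i\mid pa(X))$, so every row of $A_\sigma$ equals the target distribution, and it uses $\sum_j P(X=x_j\mid pa(X))=1$ to collapse the sum. Your caveat about the parent configuration is correct and sharper than the paper's presentation. The paper's $p_{ji}$ also depends on $pa(X)$, so its proof likewise only works one parent instantiation at a time, or with a family of matrices indexed by $pa(X)$, even though the definition speaks of a single matrix $A_\sigma$.
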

\begin{proof}
Let $dom(X)=\{x_1,\dots,x_m\}$. We define a stochastic shift interventions $\sigma$ with parameters $p_{ji}=P'(X=x_i\mid pa(X))$. Observe that for all $i\in\{1\dots,m\}$:
\begin{equation}
\begin{split}
    P(X&=x_i\mid pa(X);\sigma)=\sum_j p_{ji}P(X=x_j\mid pa(X))\\
    &=P'(X=x_i\mid pa(X))\sum_j P(X=x_j\mid pa(X))\\
    &=P'(X=x_i\mid pa(X))
\end{split}
\end{equation}
Hence, the intervention $\sigma$ transforms the original conditional distribution $P(X\mid pa(X))$ exactly into the target distribution $P'(X\mid pa(X))$, which concludes the proof.
\end{proof}

\begin{prop} [State-Domain Joint Belief Update]
Let $b$ be the current joint belief over states and domains, $a$ be the action taken at time $t$, and $o'$ the observation received after performing $a$. Then the updated state state-domain joint belief $b'$ is:
\begin{equation}
\begin{split}
    b'_{o',a}(s',\sigma)=\frac{O(s',o')}{P(o'\mid a,b)}\sum_s b(s,\sigma)\!\!\!\!\!\!\prod_{V_i\in\mathbf{V^{(t+1)}}}\!\!\!\!\!T(v_i\!\mid\! pa(V_i);\sigma)
\end{split}  
\end{equation}
\end{prop}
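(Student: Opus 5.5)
The plan is to treat the unknown domain $\sigma$ as a static component of an augmented hidden state $(s,\sigma)$, and then derive the update by applying Bayes' rule to the joint posterior.

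First, I fix the generative model for one step. The current joint belief is $b(s,\sigma)=P(s,\sigma\mid h)$, where $h$ is the history. The agent takes action $a$. The domain is assumed not to change between timesteps, so $\sigma'=\sigma$. The next state $s'=(v_1,\dots,v_n)$ is drawn from the shifted factored transition. By the causal POMDP definition and the stochastic shift definition (Equation~\ref{eq:stochasticshift}), this gives
\[
P(s'\mid s,a,\sigma)=\prod_{V_i\in\mathbf V^{(t+1)}}T_i(v_i\mid a,pa(V_i);\sigma).
\]
This factorization holds because $Pa(V_i^{(t+1)})$ lies within $\mathbf V^{(t)}\cup\{V_k^{(t+1)}\}_{k<i}$, so the product is a valid chain-rule decomposition in the ordered variables. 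The observation then follows. By Assumption~\ref{ass:POMDP1}, $P(o'\mid s',a,\sigma,s)=O(s',o')$, which is the deterministic indicator and does not depend on $\sigma$, $s$ or $a$ once $s'$ is fixed.

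Second, I write the posterior:
\[
b'_{o',a}(s',\sigma)=P(s',\sigma\mid h,a,o')=\frac{P(o'\mid s',\sigma,a,h)\,P(s',\sigma\mid h,a)}{P(o'\mid a,b)}.
\]
The first factor in the numerator equals $O(s',o')$ by the previous step. For the second factor, I marginalize over $s$:
\[
P(s',\sigma\mid h,a)=\sum_s P(s'\mid s,a,\sigma)\,b(s,\sigma).
\]
This uses two facts. The action carries no information about $(s,\sigma)$ beyond $h$, since it is chosen as a function of $h$. The domain is persistent, so no sum over a previous domain $\tilde\sigma$ is needed. If $\Sigma$ is continuous, the same steps hold with densities in $\sigma$.

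Third, I identify the normalizer as
\[
P(o'\mid a,b)=\sum_{s'}\int_\Sigma O(s',o')\sum_s b(s,\sigma)\prod_i T_i(\cdot;\sigma)\,d\sigma,
\]
and check that it is positive whenever $o'$ has nonzero probability. Substituting the pieces gives exactly Equation~\ref{eq:belief}.

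The main obstacle is justifying the conditional independences, not the algebra. Specifically, I must argue from the CID structure in Figure~\ref{fig:POMDPa} that $o'$ is independent of $(s,\sigma,a)$ given $s'$, and that $\sigma$ is time-invariant. I would state the second point explicitly as the modeling assumption that the shift, once it occurs, persists across timesteps.
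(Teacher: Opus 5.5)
Your proposal is correct and follows essentially the same route as the paper: Bayes' rule on $P(s',\sigma\mid o',a,b)$, replacing the likelihood by $O(s',o')$ via $o'\ind\{a,\sigma,b\}\mid s'$, marginalizing over $s$, and using $\{s,\sigma\}\ind a\mid b$ together with the factored shifted transition. The one addition is that you state explicitly that $\sigma$ persists across the step; the paper relies on this only implicitly when it writes $P(s',\sigma\mid a,b)=\sum_s P(s'\mid s,a,\sigma)\,P(s,\sigma\mid b)$.
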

such that $s'=(v_1,\dots,v_n)$ for $\{V_1,\dots,V_n\}=\mathbf V^{(t+1)}$.
\begin{proof}
    By definition:
    \begin{equation}
        b'_{o',a}(s',\sigma)=P(s',\sigma\mid o',a,b) 
    \end{equation}
    where $b$ is the current belief, $a$ is the action taken at the current timestep, and $o'$ is the observation received after executing $a$. By Bayes rule:
    \begin{align}
        = \frac{P(o'\mid s',a,\sigma,b)}{P(o'\mid a,b)}P(s',\sigma\mid a, b)
    \end{align}
    Note that $o'\ind \{a,\sigma,b\}\mid s'$, so $P(o'\mid s',a,\sigma,b)=P(o'\mid s')=O(s',o')$.
    \begin{align}
        = \frac{O(s',o')}{P(o'\mid a,b)}\sum_{s\in\mathbf S}P(s'\mid s, a, \sigma, b)P(s, \sigma\mid a,b)
    \end{align}
    As $s'\ind b \mid \{s,a,\sigma\}$ and $\{s,\sigma\}\ind a\mid b$
    \begin{align}
        =&\frac{O(s',o')}{P(o'\mid a,b)}\sum_{s\in\mathbf S}P(s'\mid s, a, \sigma)P(s, \sigma\mid b) \\
        =&\frac{O(s',o')}{P(o'\mid a,b)}\sum_{s\in\mathbf S} b(s,\sigma)\!\!\!\!\prod_{V_i\in\mathbf{V^{(t+1)}}}\!\!\!\!T(v_i\mid pa(V_i);\sigma)
    \end{align}
    which corresponds to the desired expression.
\end{proof}

\begin{lemma}
    Let $\Sigma$ be a set of stochastic shift interventions, the value function of a causal POMDP with set of states $\mathbf S$ subject to an unknown distribution shift in the set $\Sigma$ with planning horizon $n$ can be expressed as:
    \begin{equation}
        V_n(b)=\max_{\{\alpha_n^i\}_i}\sum_{s\in \mathbf S}\int_{\Sigma}\alpha^i_n(s,\sigma)b(s,\sigma)\,d\sigma
    \end{equation}
    for some functions $\alpha_n^i:S\times\Sigma\to \mathbb R$.
\end{lemma}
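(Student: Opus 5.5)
The plan is to argue by induction on the horizon $n$, following the constructive approach of \citet{PortaRobot} and working through the state--action value function $Q_n(b,a)$. The augmented ``state'' is the pair $(s,\sigma)$. Since $\sigma$ is unknown but fixed over the episode, it evolves trivially: the joint process on $(s,\sigma)$ is an ordinary POMDP whose transition kernel is
\[
T_i(v_i'\mid a,pa(V_i');\sigma)\,\delta(\sigma'-\sigma).
\]
The belief over this process is exactly the joint belief updated by Proposition~\ref{propo:belief}. Sums over $\sigma$ become integrals when $\Sigma$ is a continuum of matrices $A_\sigma$. Throughout, I will treat $b(\cdot,\sigma)$ as a density and assume the rewards are bounded, so that every integral below is finite.

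\textbf{Base case.} For $n=1$ I will write
\[
V_1(b)=\max_a\sum_s\int_\Sigma R(s,a)\,b(s,\sigma)\,d\sigma ,
\]
so the candidate vectors are $\alpha_1^a(s,\sigma)=R(s,a)$, one per action. The reward does not depend on $\sigma$, but it is harmless to view it as a function on $S\times\Sigma$.

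\textbf{Inductive step.} Assume
\[
V_{n-1}(b)=\max_i\sum_s\int\alpha_{n-1}^i(s,\sigma)\,b(s,\sigma)\,d\sigma .
\]
The Bellman backup is
\[
Q_n(b,a)=\sum_s\int R(s,a)\,b\,d\sigma+\gamma\sum_{o'}P(o'\mid a,b)\,V_{n-1}(b'_{o',a}).
\]
Substituting Proposition~\ref{propo:belief}, the normalizer $P(o'\mid a,b)$ cancels against the prefactor. This leaves, for each $o'$,
\[
\max_i\sum_{s'}\int \alpha_{n-1}^i(s',\sigma)\,O(s',o')\sum_s b(s,\sigma)\prod_i T_i(v_i'\mid a,pa(V_i');\sigma)\,d\sigma .
\]
Exchanging the finite sums in $s,s'$ with the integral over $\sigma$ rewrites this as $\max_i\sum_s\int g^i_{a,o'}(s,\sigma)\,b(s,\sigma)\,d\sigma$, where
\[
g^i_{a,o'}(s,\sigma)=\sum_{s'}O(s',o')\prod_i T_i(v_i'\mid a,pa(V_i');\sigma)\,\alpha^i_{n-1}(s',\sigma).
\]
The key point is that the domain variable is carried along unchanged, so each backed-up vector is still evaluated at the same $\sigma$.

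Next I choose one index $i(o')$ for each observation. A sum over $o'$ of maxima equals a maximum over such choice functions of sums. Hence
\[
Q_n(b,a)=\max_{i(\cdot)}\sum_s\int\Big(R(s,a)+\gamma\sum_{o'}g^{i(o')}_{a,o'}(s,\sigma)\Big)b(s,\sigma)\,d\sigma .
\]
Taking the maximum over $a$ gives the claimed form. The new vectors are $\alpha_n^{a,i(\cdot)}$, and there are finitely many of them: at most $|\mathcal A|\,|\{\alpha_{n-1}\}|^{|\Omega|}$.

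\textbf{Main obstacle.} I expect the delicate step to be the cancellation of $P(o'\mid a,b)$ together with the exchange of sum and integral. Both must be justified when the normalizer is zero; there the term has zero weight, so the convention $V_{n-1}(b'_{o',a})P(o'\mid a,b)=0$ works. The step also needs measurability and integrability of $\alpha(\cdot,\sigma)$ in $\sigma$. This holds because the shifted transitions are affine in the entries of $A_\sigma$ through Equation~\ref{eq:stochasticshift}, hence bounded and measurable, and these properties are preserved by the recursion.
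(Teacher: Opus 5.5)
Your proof is correct and follows essentially the same induction as the paper. It uses the same Bellman backup, the same cancellation of $P(o'\mid a,b)$ after substituting Proposition~\ref{propo:belief}, and the same backed-up functions $g^i_{a,o'}$ (the paper's $\alpha^j_{a,o'}$); your base case at $n=1$ is only an indexing shift from the paper's $n=0$.

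The one substantive difference favors your version. The paper fixes $\alpha_{a,o',b}$ as the argmax at the given belief $b$, so its family $\{\alpha_n^i\}$ depends on $b$. Your maximum over choice functions $i(\cdot)$ instead gives a single belief-independent family of at most $|\mathcal{A}|\,|\{\alpha_{n-1}\}|^{|\Omega|}$ functions, which is exactly the finiteness the PWLC theorem relies on.
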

\begin{proof}
    We prove the statement by induction on the planning horizon $n$.\\
    \textbf{Base case.} Let $n=0$, since the plan ends after the execution of a single action, then the value function corresponds to:
    \begin{equation}
        V_0(b)=\max_{a\in\mathbf{\mathcal A}} Q_0(b,a)=\max_{a\in\mathbf{\mathcal A}} \sum_{s\in \mathbf S}\int_\Sigma R(s,a)b(s,\sigma)\, d\sigma
    \end{equation}
    We define $\{\alpha_0^i(s,\sigma)\}=\{R(s,a)\}_{a\in \mathbf{\mathcal{A}}}$ and hence the value function is in the desired form.\\
    \textbf{Induction step.} Assume the statement holds for planning horizon $n-1$. Let $b_{o',a}'$ be the belief obtained by updating the belief $b$ after observing $o'$ and executing action $a$ according Equation~\ref{eq:belief}. Observe that the value function for a planning horizon $n$ can be defined recursively as:
    \begin{equation}\label{eq:vn}
    \begin{split}
        V_n(b)=\max_{a\in\mathbf{\mathcal{A}}}\{\sum_{s\in \mathbf S}\int_\Sigma b(s,\sigma)R(s,a)\, d\sigma +\gamma \sum_{o'\in\mathbf\Omega}P(o'\mid a,b)V_{n-1}(b_{o',a}')\}
    \end{split}
    \end{equation}
    by inductive hypothesis:
    \begin{equation}\label{eq:vn-1}
        V_{n-1}(b_{o',a}')=\max_{\{\alpha_{n-1}^j\}}\sum_{s'\in \mathbf S}\int_\Sigma \alpha_{n-1}^j(s',\sigma)b_{o',a}'(s',\sigma) \, d\sigma
    \end{equation}
    by substituting the expression from Equation~\ref{eq:vn-1} into Equation~\ref{eq:vn}, we obtain:
    \begin{equation}
    \begin{split}
        V_n(b)=\max_a \{&\sum_{s\in \mathbf S}\int_\Sigma R(s,a)b(s,\sigma)\, d\sigma+\\+\gamma&\sum_{o'\in\mathbf\Omega}P(o'\mid a,b)\max_{\{\alpha_{n-1}^j\}}\sum_{s'\in \mathbf S}\int_\Sigma \alpha_{n-1}^j(s',\sigma)b_{o',a}'(s',\sigma) \, d\sigma\}
    \end{split}
    \end{equation}
    by updating the belief according to Proposition~\ref{propo:belief} we get:
    \begin{equation}\label{eq:afterBelief}
        \begin{split}
            =\max_{a\in\mathbf{\mathcal{A}}} \{&\sum_{s\in \mathbf S}\int_\Sigma R(s,a)b(s,\sigma)\, d\sigma+\\
            +\gamma&\sum_{o'\in\mathbf\Omega}\max_{\{\alpha_{n-1}^j\}} \sum_{s\in \mathbf S}\int_\Sigma\left[\sum_{s'\in \mathbf S} O(s',o') P(s'\mid s,a,\sigma)\alpha_{n-1}^j(s',\sigma)\right]b(s,\sigma) \, d\sigma\}
        \end{split}
    \end{equation}
    Let $\alpha_{a,o'}^j(s,\sigma)\!\coloneqq\!\sum_{s'\in \mathbf S} \!O(s',o') P(s'\!\mid\! s,a,\sigma)\alpha_{n-1}^j(s',\sigma)$, then Equation~\ref{eq:afterBelief} can be rewritten as:
    \begin{equation}
        \begin{split}
            =\max_a \{\sum_{s\in \mathbf S}\int_\Sigma R(s,a)b(s,\sigma)\, d\sigma+\gamma\sum_{o'\in\mathbf\Omega}\max_{\{\alpha_{n-1}^j\}} \sum_{s\in \mathbf S}\int_\Sigma\alpha_{a,o'}^j(s,\sigma)b(s,\sigma) \, d\sigma\}
        \end{split}
    \end{equation}
    Let $\alpha_{a,o',b}\coloneqq \argmax_{\{\alpha_{a,o'}^j\}} \sum_{s\in \mathbf S}\int_\Sigma\alpha_{a,o'}^j(s,\sigma)b(s,\sigma) \, d\sigma$, then:
    \begin{equation}
        \begin{split}
            V_n(b)=\max_{a\in\mathbf{\mathcal{A}}}\!\left\{\sum_{s\in \mathbf S}\int_\Sigma\!\left [R(s,a)\!+\!\gamma\!\sum_{o'\in\mathbf\Omega}\!\alpha_{a,o',b}\right ]b(s,\sigma)\,d\sigma\!\right\}
        \end{split}
    \end{equation}
    Finally, we define:
    \begin{equation}
        \{\alpha_n^i\}_i\coloneqq \{R(s,a)+\gamma\sum_{o'\in\mathbf\Omega}\alpha_{a,o',b}\}_{a\in \mathbf{\mathcal{A}}}
    \end{equation}
    Then:
    \begin{equation}
        V_n(b)=\max_{\{\alpha_n^i\}_i}\sum_{s\in \mathbf S}\int_\Sigma \alpha_n^i(s,\sigma)b(s,\sigma)\, d\sigma
    \end{equation}
    proving the induction step.
\end{proof}

\begin{theorem}
    Let $\Sigma$ be a set of stochastic shift interventions, the value function of a causal POMDP subject to an unknown distribution shift in the set $\Sigma$ with planning horizon $n$ is piecewise linear and convex in the joint state-domain belief $b$:
\end{theorem}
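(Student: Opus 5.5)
The plan is to derive the theorem directly from Lemma~\ref{lem:ind}, together with a finiteness argument for the family of $\alpha$-functions. By Lemma~\ref{lem:ind}, for every horizon $n$ we have
\[
V_n(b)=\max_{i}\,\langle \alpha_n^i, b\rangle,
\qquad
\langle \alpha, b\rangle := \sum_{s\in\mathbf S}\int_\Sigma \alpha(s,\sigma)\,b(s,\sigma)\,d\sigma .
\]
For a fixed $\alpha$, the map $b\mapsto\langle\alpha,b\rangle$ is linear in $b$. This holds because sums and integrals are linear, and because the joint beliefs form a convex subset of the space of (densities of) measures on $\mathbf S\times\Sigma$. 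Convexity of $V_n$ will then follow from a standard fact: a pointwise supremum of linear functionals is convex. Explicitly, for $b=\lambda b_1+(1-\lambda)b_2$ we have
\[
\langle\alpha^i,b\rangle=\lambda\langle\alpha^i,b_1\rangle+(1-\lambda)\langle\alpha^i,b_2\rangle\le \lambda V_n(b_1)+(1-\lambda)V_n(b_2),
\]
and taking the maximum over $i$ gives convexity.

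For piecewise linearity, I will show that the index set $\{\alpha_n^i\}_i$ can be taken finite. I will do this by re-examining the construction in the proof of Lemma~\ref{lem:ind}, by induction on $n$.
\begin{itemize}
\item Base case: the set $\{R(\cdot,a)\}_{a\in\mathcal A}$ has at most $|\mathcal A|$ elements.
\item Inductive step: suppose $\Gamma_{n-1}$ is a finite set of $\alpha$-functions. Then for each pair $(a,o')$ the back-projected set $\{\alpha^j_{a,o'}\}$ is finite, since it is the image of $\Gamma_{n-1}$ under a fixed linear operator $\alpha\mapsto \sum_{s'}O(s',o')P(s'\mid s,a,\sigma)\alpha(s',\sigma)$.
\item The new set is $\Gamma_n=\{R(\cdot,a)+\gamma\sum_{o'}\alpha_{a,o'}^{j_{o'}} : a\in\mathcal A,\ j_{o'}\text{ ranging over choices for each } o'\}$. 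Its size is at most $|\mathcal A|\,|\Gamma_{n-1}|^{|\mathbf\Omega|}$, hence finite, because $\mathcal A$ and $\mathbf\Omega$ are finite.
\end{itemize}
The step I expect to need the most care is the following. In the lemma the maximizer $\alpha_{a,o',b}$ is written as depending on $b$, so I have to justify that indexing over all choices $(j_{o'})_{o'}$ still yields exactly $V_n$. The key identity is
\[
\sum_{o'}\max_j \langle\alpha^j_{a,o'},b\rangle=\max_{(j_{o'})_{o'}}\Big\langle \sum_{o'}\alpha^{j_{o'}}_{a,o'},b\Big\rangle,
\]
which holds because the choices for different $o'$ are independent. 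With this, $V_n$ is a maximum of finitely many linear functionals $b$-independently.

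To conclude, I define the regions $B_n^k=\{b : \langle\alpha_n^k,b\rangle\ge\langle\alpha_n^i,b\rangle\ \forall i\}$. These are finitely many regions, each cut out by linear inequalities in $b$, and together they cover the belief space. On each $B_n^k$ the value function satisfies $V_n(b)=\langle\alpha_n^k,b\rangle$, which is linear. So $V_n$ is piecewise linear and convex.

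There is one point to note when $\Sigma$ is infinite. Here "linear" means linear in the (infinite-dimensional) belief $b$, and the $\alpha$-functions are functions of $(s,\sigma)$. The finiteness argument does not depend on $|\Sigma|$, since it only uses the finiteness of $\mathcal A$ and $\mathbf\Omega$. I also need the integrals to be well defined, which follows from the boundedness of $R$ together with the boundedness preserved by the back-projection.
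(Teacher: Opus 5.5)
Your proposal is correct and follows the same route as the paper. Both invoke Lemma~\ref{lem:ind}, observe that each map $b\mapsto\sum_{s}\int_\Sigma\alpha_n^i(s,\sigma)b(s,\sigma)\,d\sigma$ is linear, get piecewise linearity from the finiteness of the $\alpha$-family, and get convexity from the pointwise maximum of linear functionals. Where you go beyond the paper, the addition is useful: the paper only asserts that there are finitely many $V_n^i$, even though its lemma writes the maximizer $\alpha_{a,o',b}$ as depending on $b$, whereas your induction actually proves it, via the identity $\sum_{o'}\max_j\langle\alpha^j_{a,o'},b\rangle=\max_{(j_{o'})}\langle\sum_{o'}\alpha^{j_{o'}}_{a,o'},b\rangle$ and the bound $|\mathcal A|\,|\Gamma_{n-1}|^{|\mathbf\Omega|}$.
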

\begin{proof}
    By Lemma~\ref{lem:ind} for every planning horizon $n$ we know that:
    \begin{equation}
        V_n(b)=\max_{\{\alpha_n^i\}_i}\sum_{s\in \mathbf S}\int_{\Sigma}\alpha_i^n(s,\sigma)b(s,\sigma)\,d\sigma
    \end{equation}
    Let $V_n^i(b)\coloneqq \sum_s\int_\Sigma \alpha_i^n(s,\sigma)b(s,\sigma)\,d\sigma$ and $b_1,b_2$ be two joint state-domain beliefs. Let $\lambda_1,\lambda_2\in \mathbb R$, then:
    \begin{equation}
    \begin{split}
        V_n^i(\lambda_1b_1+\lambda_2b_2)&=\sum_{s\in\mathbf S}\int_{\Sigma}\alpha_i^n(s,\sigma)(\lambda_1b_1(s,\sigma)+\lambda_2b_2(s,\sigma))\,d\sigma \\
        &=\lambda_1\sum_{s\in\mathbf S}\int_{\Sigma}\alpha_i^n(s,\sigma)b_1(s,\sigma)\,d\sigma + \lambda_2\sum_{s\in\mathbf S}\int_{\Sigma}\alpha_i^n(s,\sigma)b_2(s,\sigma)\,d\sigma\\
        &=\lambda_1V_n^i(b_1) + \lambda_2V_n^i(b_2)
    \end{split}
    \end{equation}
    Therefore $V_n^i(b)$ is linear in $b$. Since, for any belief $b$, the value function $V_n(b)$ is given by the $V_n^i(b)$ with the largest value, and because there is a finite number of linear functions $V_n^i(b)$, it follows that $V_n(b)$ is piecewise linear. Since linear functions are convex and $V_n(b)$ is obtained as the pointwise maximum of linear functions, $V_n(b)$ is convex as well.
\end{proof}

\end{document}